\newcommand{\set}[1]{\{#1\}}
\newcommand{\Set}{{\sf Set}}
\newcommand{\Meas}{{\sf Meas}}
\newcommand{\C}{{\cal C}}
\newcommand{\N}{{\cal N}}
\newcommand{\RR}{\mathds{R}}
\newcommand{\ba}{{\bf a}}
\newcommand{\bsigma}{{\boldsymbol \sigma}}
\newcommand{\bb}{{\bf b}}
\theoremstyle{definition}
\newtheorem{definition}{\bf Definition}[section]
\newenvironment{example}
{\pushQED{\qed}\examplex}
{\popQED\endexamplex}
\newtheorem{proposition}[definition]{Proposition} 
\newtheorem{theorem}[definition]{Theorem} 
\newtheorem{lemma}[definition]{Lemma}
\begin{document}
\title{Superrational types}

\author[1,2]{Fernando A.~Tohm\'e }
\author[2,3]{Ignacio D. Viglizzo}
\affil[1]{Departmento de Econom\'ia, Universidad Nacional del Sur (UNS), Bah\'ia Blanca, Argentina}
\affil[2]{Instituto de Matem\'atica de Bah\'ia Blanca (INMABB), Universidad Nacional del Sur-CONICET, Bah\'ia Blanca,	Argentina}
\affil[3]{Departmento de Matem\'atica, Universidad Nacional del Sur (UNS), Bah\'ia Blanca, Argentina}

\maketitle

\begin{abstract}
We present a formal analysis of Douglas Hofstadter's concept of \emph{superrationality}.  We start by defining superrationally justifiable actions, and study them in symmetric games. We then model the beliefs of the players, in a way that leads them to different choices than the usual assumption of rationality by restricting the range of conceivable choices. These beliefs are captured in the formal notion of \emph{type} drawn from epistemic game theory.  The theory of coalgebras is used to frame type spaces and to account for the existence of some of them.  We find conditions that guarantee superrational outcomes.\end{abstract}

\section{Introduction}

Game Theory has, traditionally assumed that agents make decisions according to the criterion of full rationality \cite{Gintis2009}. In contrast to the case of {\em bounded rationality} (\cite{Simon72}, \cite{Rubinstein98}), agents choose the best actions to achieve their goals without limitations to their ability to do so. The research on behavioral factors in decision-making, initiated by Kahneman and Tversky (\cite{KT}) lead to a dearth of results under weaker rationality assumptions. On the other hand, the interest in overcoming ``undesirable'' results such as inefficient equilibria in social dilemmas lead to alternative concepts, diverging from the rationality assumption \cite{Howard71}, \cite{Rapoport67}. One of the most intriguing of these notions is Douglas Hofstadter's {\em superrationality} \cite{hofstadter83dilemmas}:

\begin{quote}
\textit{  Any number of ideal rational thinkers faced with the same situation and undergoing the same throes of reasoning agony will necessarily come up with the identical answer eventually, so long as reasoning alone is the ultimate justification for their conclusion.}
\end{quote}

In a game context, the conditions for ``being in the same situation'' translates in the implicit requirements that the players can choose from the same pool of actions, and  furthermore, that these actions have the same consequences for all the players (otherwise, no common election may be conceived). Then, in such a game, the definition of superrationality leads in the first place to the choice of a profile of identical strategies. Furthermore, among all these profiles, since the agents intend to maximize their payoffs, they will choose the one that yields the highest payoff, the same for all of them. See \cite{hofstadter85metamagical} for Hofstadter's extensive discussion on the social and political implications of superrationality as well as on how it works in different contexts.

To model superrational behavior without resorting to arbitrary constrains and to justify internal reasoning processes that can lead the  agents to take it, we need to include  their beliefs. This puts the problem in the frame of {\em epistemic game theory}. In this field the concept of {\em type} of an agent plays a fundamental role, capturing the structure of beliefs and knowledge that, jointly with their preferences, lead to the agent's decision.

The goal of this paper is thus to explore formalizations of superrationality, taking into account explicitly the types of the players. Since beliefs can be modeled in different ways, we adopt alternative characterizations of the types of the agents, and show that they can support superrationality. 

The plan of the paper is as follows. Section 2 introduces the notions of superrationally justifiable actions and superrational profiles and compares the latter to Nash equilibria. Superrational profiles are defined based purely on the payoff matrix of a game.  For the players to converge on a superrational profile, a unique superrationally justifiable action must be present in the game. In symmetric games the existence of superrationally justifiable actions can be asserted. The example of the Prisoner's dilemma shows that sometimes the concept of superrationality can lead to better outcomes for the players. All of the results can be extended to mixed strategies. In Section 3 we start analyzing epistemic conditions that could lead the players to aim for the superrational outcomes, both in pure and mixed strategies, modeling the types of the players.  For this we use measure theory to represent the beliefs of the players, and draw from the theory of coalgebra to establish the existence of some of the required type spaces. We compare the epistemic conditions with those posed by Aumann and Brandenburger for reaching Nash equilibria. In Section 4 we move on to strategic belief models, where the beliefs of the players are represented by sets of possible states of the world. Section 5 is dedicated to include the case in which each player draws her type from a different type space. An equivalence relation still allows to give an interpretation of superrationality in this context. Finally, we present some conclusions.

\section{Superrational profiles}

To get to a definition of superrationality based purely on the payoff matrix of a game, let us introduce some preliminary definitions:

\begin{definition}\label{definitionGame}
Let $G = \langle I, \{A_i\}_{i \in I}, \{\pi_i\}_{i \in I} \rangle$ be a {\em game}, where $I=\set{1,\ldots, n}$ is a set of {\em players} and $A_i, i\in I$ is a finite set of {\em actions} for each player. An {\em action profile}, $\ba=(a_1, \ldots, a_n)$ is an element of $A = \prod_{i \in I} A_i$.
In turn, $\pi_i : A \rightarrow \mathds{R} $ is player $i$'s payoff.
\end{definition}

To even consider superrationality in a game, all the players must have the same set of actions available, so we restrict ourselves to games in which this is the case. An action of a player is \textit{superrationally justifiable} if under the assumption that all the players will coincide in their choices, the payoff is maximized.

 \begin{definition}
 	In a game $G$ in which for every $i,j \in I$, $A_i = A_j$, an action $a^{*}  \in A_i$ is {\em superrationally justifiable},  iff  for every $i \in I$ and every $a \in A_i	$, $\pi_i(a^{*},\ldots,a^*) \geq \pi_i(a,\ldots,a)$.
 \end{definition}
 
 It follows from the definition that the superrationally justifiable actions are the same for all the players. Assuming that all players choose the same superrationally justifiable action, we obtain a profile in the `diagonal' of $G$, denoted by $\Gamma_G \subseteq A$, of the profiles of the form $(a,\ldots, a)$. A new solution concept can be thus defined:
 \begin{definition}
 	A profile $\ba^{*} \in \Gamma_G$ is {\em superrational}, indicated $\ba^{*} \in \mathcal{SR}_{G} \subseteq A$ iff $\pi_i(\ba^{*}) \geq \pi_i(\ba)$ for every $i \in I$ and every $\ba \in \Gamma_G$.
 \end{definition}

Since superrational outcomes are compared to Nash equilibria, let us recall the definition of this solution concept:

\begin{definition}\label{Nash}
	An action profile  $\ba^{*}= (a^{*}_1, \ldots, a^{*}_n)$ in a game $G$ is a Nash equilibrium, $\ba^*\in{\mathcal{NE}_G}$, if and only if for every $i$ and every alternative $a_i \in A_i$
	
	\[\pi_i(a^*_1,\ldots,a^*_i, \ldots a^*_n) \geq \pi_{i}(a^*_1,\ldots,a_i, \ldots a^*_n)\]
\end{definition}

Notice that, according to Nash's theorem \cite{nash51noncooperative}, a Nash equilibrium always exist in $\Delta A_1 \times \ldots \times \Delta A_n$, where $\Delta A_i$ is the class of probability distributions over $A_i$ ({\em mixed strategies}). The payoffs in the game in which these strategies are used obtain by finding the expected values of the payoffs defined on $A$. That is, if $\bsigma = (\sigma_1, \ldots, \sigma_n) \in$$\Delta A_1 \times \ldots \times \Delta A_n$, the expected payoff of $i$ is $E\pi_i(\bsigma) = \sum_{(a_1, \ldots, a_n) \in A}\left( \pi_i(a_1, \ldots, a_n) \prod_{k=1}^n \sigma_k(a_k)\right)$.

If a superrational profile $\ba\in \mathcal{SR}_G$ is reached, each of its actions are superrationally justified, but the converse is not true.

\begin{example}\label{justin}
Consider the following game:
 \begin{center}
 	\begin{tabular}{c|c|c|c}
 		&a&b&c\\\hline
 		a& $(2,3)$&$(0,0)$&$(0,0)$\\ \hline
 		b& $(0,0)$&$(2,3)$&$(0,0)$\\ \hline
 		c& $(0,0)$&$(0,0)$&$(2,2)$
 	\end{tabular}
 \end{center}
  Both $a$ and $b$ are superrationally justifiable actions and $\ba = (a,a)$ and $\bb =(b,b)$ are both superrational profiles and Nash equilibria. On the other hand, $(c,c)$ is a Nash equilibrium but not superrational.  Neither $(a,b)$ nor $(b,a)$ belong to $\mathcal{SR}_G$, even though their individual actions are superrationally justifiable.
 \end{example}

Superrationally justifiable actions are not always available, and even if they are, they would not be chosen by any agent interested in maximizing her payoffs:

\begin{example}\label{BoS} Consider the Battle of the Sexes (\cite{LuceRaiffa57}), in which each player can choose either \textit{Box} or \textit{Ballet}. It is apparent that none of the actions is superrationally justifiable.
\begin{center}
	\begin{tabular}{c|c|c}
		&\textit{Box}&\textit{Ballet}\\\hline
		\textit{Box}& $(2,1)$&$(0,0)$\\ \hline
		\textit{Ballet}& $(0,0)$&$(1,2)$
	\end{tabular}
\end{center}
Thus $\mathcal{SR}$ is empty for this game.
\end{example}
\begin{example}\label{nonsym} In contrast, in the following game,  $\mathcal{SR}=\set{(b,b)}$, but clearly the profile $(b,a)$ is better for both players:
	\begin{center}
		\begin{tabular}{c|c|c}
			&a&b\\\hline
			a& $(0,0)$&$(0,0)$\\ \hline
			b& $(2,2)$&$(1,1)$
		\end{tabular}
	\end{center}
\end{example}

In non-symmetric games, even if actions for different players have the same name, the preferences over the consequences they yield  may differ wildly from one player to another. Therefore, it makes sense to further restrict the class of games under consideration to \textit{symmetric} games. As a bonus, we get that for this class of games,  superrationally justifiable actions always exist.

\begin{definition}\label{symmetricGame} (\cite{tohme17symmetry})
	\label{permutationInvariant}
	A game $G$ in which for every $i,j \in I$, $A_i = A_j$  is {\em symmetric}  if the payoff function $\pi:A\to\RR^n$ is  invariant under permutations; this is, for any permutation $\tau$ (that is, a bijection $\tau :I\to I$)  and every action profile  $(a_1, \ldots, a_n)$ we have that for all $i\in I$,
	
	\begin{equation}\label{symmetric}
	\pi_i(a_1,\ldots, a_n)= \pi_{\tau^{-1}(i)}(a_{\tau(1)},\ldots, a_{\tau(n)}).\end{equation}
\end{definition}
Notice that the games in Examples \ref{justin}, \ref{BoS}, and \ref{nonsym} are not symmetric.

\begin{lemma}\label{same_payoff}
	In a symmetric game, if all players play the same action, they all get the same payoff.
\end{lemma}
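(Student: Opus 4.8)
The plan is to use the permutation-invariance condition \eqref{symmetric} from Definition \ref{symmetricGame} directly, exploiting the key feature of a ``diagonal'' profile: all of its coordinates are equal, so reordering them changes nothing. Concretely, fix an action $a \in A_i$ (recall all the $A_i$ coincide) and consider the profile $(a,\ldots,a)$. I want to show that for any two players $i,j \in I$ we have $\pi_i(a,\ldots,a) = \pi_j(a,\ldots,a)$, which is exactly the assertion of the lemma.

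The first step is to observe that for \emph{any} permutation $\tau : I \to I$, the permuted profile $(a_{\tau(1)},\ldots,a_{\tau(n)})$ obtained from $(a,\ldots,a)$ is again $(a,\ldots,a)$, since every entry equals $a$ regardless of how the indices are shuffled. Substituting this into \eqref{symmetric} yields, for every $i \in I$,
\[
\pi_i(a,\ldots,a) = \pi_{\tau^{-1}(i)}(a,\ldots,a).
\]
The second step is to choose $\tau$ so as to relate the two players we care about. Given $i,j \in I$, I would take $\tau$ to be the transposition that swaps $i$ and $j$ and fixes everything else; this is a bijection of $I$, hence an admissible permutation, and it satisfies $\tau^{-1}(i) = j$. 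Plugging this $\tau$ into the displayed equation gives $\pi_i(a,\ldots,a) = \pi_j(a,\ldots,a)$, as desired. Since $i$ and $j$ were arbitrary, all players receive the same payoff on the diagonal profile.

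There is no real obstacle here; the only point requiring a moment's attention is the observation that the diagonal profile is invariant under coordinate permutation, which is what makes the right-hand side of \eqref{symmetric} collapse to the same diagonal profile rather than some rearrangement. Once that is noted, the choice of the transposition $\tau = (i\ j)$ finishes the argument immediately. I would present the proof in just these two moves, emphasizing that the conclusion holds for the whole diagonal $\Gamma_G$ at once, not merely for a single fixed action.
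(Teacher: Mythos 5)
Your proof is correct and follows essentially the same route as the paper's: both apply the permutation-invariance condition to the diagonal profile, noting it is fixed by any permutation, and then pick a permutation $\tau$ with $\tau^{-1}(i)=j$ to conclude $\pi_i(a,\ldots,a)=\pi_j(a,\ldots,a)$. Your version merely makes explicit the choice of the transposition and the observation that the diagonal profile is invariant under coordinate permutation, which the paper leaves implicit.
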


\begin{proof}
	If all players play the same action, say $a_1$, then for any $i, j\in I$, let $\tau$ be a permutation such that $\tau^{-1}(i)=j$. Then \[\pi_i(a_1,\ldots,a_1)=\pi_j(a_1,\ldots,a_1).\]
\end{proof}

As a consequence of Lemma \ref{same_payoff}, since the set of actions is finite, we get that:

\begin{proposition}\label{existSRprofile}
	In any symmetric game there exist superrational profiles.
\end{proposition}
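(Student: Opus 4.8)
The plan is to reduce the statement to the existence of a maximizer of a real-valued function on a finite set, using Lemma \ref{same_payoff} to collapse the per-player comparisons into a single one. First I would observe that the diagonal $\Gamma_G$ is in bijection with the common action set $A_1 = \cdots = A_n$ via $a \mapsto (a, \ldots, a)$, and is therefore a finite, nonempty set.

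The key step is to invoke Lemma \ref{same_payoff}: for any diagonal profile $(a, \ldots, a)$ the payoff $\pi_i(a, \ldots, a)$ does not depend on $i$. This lets me define a single function $f : A_1 \to \RR$ by $f(a) = \pi_1(a, \ldots, a)$, which records the common payoff received by all players when they coordinate on $a$. Since $A_1$ is finite and nonempty, $f$ attains a maximum at some $a^* \in A_1$.

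I then claim that $\ba^* = (a^*, \ldots, a^*)$ is superrational. Indeed, for every player $i$ and every $\ba = (a, \ldots, a) \in \Gamma_G$, Lemma \ref{same_payoff} gives $\pi_i(\ba^*) = f(a^*) \geq f(a) = \pi_i(\ba)$, which is exactly the defining inequality for membership in $\mathcal{SR}_G$. Hence $\ba^* \in \mathcal{SR}_G$, and the set of superrational profiles is nonempty.

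The entire content of the argument — and the only place symmetry enters — lies in the step where the Lemma turns the family of inequalities $\pi_i(\ba^*) \geq \pi_i(\ba)$, indexed by $i$, into a single scalar maximization. Without symmetry, a diagonal action that is best for one player need not be best for another (as Example \ref{nonsym} illustrates), so no single $a^*$ need work simultaneously for all players; the Lemma is precisely what guarantees a common maximizer. Everything else is the routine fact that a real-valued function on a finite nonempty set attains its maximum, so I do not anticipate any genuine obstacle beyond correctly isolating the role played by the Lemma.
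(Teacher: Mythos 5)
Your proof is correct and follows the same route as the paper: invoke Lemma \ref{same_payoff} to see that the payoff on the diagonal is a single real-valued function of the common action, then take a maximizer over the finite action set. The paper's own proof is just a one-sentence compression of exactly this argument.
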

	
	\begin{proof}
		Since by Lemma \ref{same_payoff}, in all the profiles in $\Gamma_G$, all the players get an identical payoff, an action  that maximizes the payoff for one of them gives also the maximum for all of them.
	\end{proof}
	
The following examples of symmetric games show that the notions of Nash equilibria and superrational profiles are different, and it is not clear that one is better than the other:

\begin{example}\label{Bad}
	Consider the symmetric game given by the payoff matrix: 
	\begin{center}
		\begin{tabular}{c|c|c}
			&a&b\\\hline
			a& $(0,0)$&$(1,1)$\\ \hline
			b& $(1,1)$&$(0,0)$
		\end{tabular}
	\end{center}
	Here   $\mathcal{SR}=\set{(a,a),(b,b)}$, while ${\mathcal{NE}}=\set{(a,b),(b,a)}$ so the Nash equilibria are all outside the diagonal $\Gamma$.
\end{example}
\begin{example}\label{prisoner}
	 In the famous Prisoner's Dilemma, however, we have that $\mathcal{SR} = \{(C,C)\}$ and $\mathcal{NE}=\set{(D,D)}$ (where $C$ and $D$ are, respectively ``Cooperate'' and ``Defect'') so both are in the diagonal, but they are different.
 \begin{center}
  \begin{tabular}{c|c|c}
  &C&D\\\hline
  C& $(3,3)$&$(0,5)$\\ \hline
  D& $(5,0)$&$(1,1)$
  \end{tabular}
\end{center}

In this case, the superrational outcome is better for both players than the Nash equilibrium, while in the previous one, the opposite is true. Analogously, in the closely related and well known Traveler's Dilemma \cite{basu2007traveler}, the superrational outcome is much better for both players than the Nash equilibrium.
\end{example}

We now study the notion of superrational profiles in mixed strategies. Let $\Gamma^{\Delta}_{G}\subseteq   \Delta A_1 \times \ldots \times \Delta A_n$ be the set   \[\Gamma^{\Delta}_{G} = \{\sigma = (\sigma_1, \ldots, \sigma_n): \sigma_i = \sigma_j \ \mbox{for every} \ i,j \in I\}\] 

\begin{definition}\label{mixedNash}
Given a game $G$, a \textit{superrational profile of mixed strategies} is a profile  $\bsigma^{*} = (\sigma^{*}, \ldots, \sigma^{*}) \in \Gamma^{\Delta}_{G}$,  denoted $\bsigma^{*} \in \mathcal{SR}^\Delta_{G}$, if and only if  for every $\bsigma \in \Gamma^{\Delta}_{G}$ and $i\in I$, $E\pi_i(\bsigma^{*}) \geq E\pi_i(\bsigma)$. The mixed strategy $\sigma^*$ is then a \textit{superrationally justifiable mixed strategy}.
\end{definition}

\begin{example}
	The Chicken Game (\cite{RapoportChammah66}) involves another conflicting situation among two parties. Call the actions $S$ and $Y$ (for ``go straight'' and ``yield'', respectively) with payoffs:
	
	\begin{center}
		\begin{tabular}{c|c|c}
			&S&Y\\\hline
			S& $(-10,-10)$&$(1,-1)$\\ \hline
			Y& $(-1,1)$&$(0,0)$
		\end{tabular}
	\end{center}

	The superrational solution, in pure strategies is $\mathcal{SR} = \{(Y,Y)\}$, in which  both players think that the other will choose the same action as they do, and consequently, they are better off yielding.
	
Let us call $p \in [0,1]$ the probability of choosing $S$ while $Y$ has a probability $1-p$. Then $\Gamma^{\Delta} = \{(p,p): p \in [0,1]\}$. The superrational mixed strategy obtains by maximizing $E\pi_i(p,p)$ for each $i=1,2$. We have that:
	
	$$E\pi_1(p,p)\ =E\pi_2(p,p)= \ -10 p^2 + 1(p-p^2) + (-1)(p-p^2)  = \ -10 p^2.$$
	 This means that the expected payoffs are maximized at $p=0$, i.e. $\mathcal{SR}^\Delta = \{(0,0)\}$, which can be identified with the pure superrational strategy profile $(Y,Y)$.
\end{example}

The previous example might suggest that, as with Nash equilibria, pure strategy superrational outcomes support also superrationality in mixed strategy.
Such parsimonious behavior is not ensured, as shown in Hofstadter's {\em Platonia Dilemma} \cite{hofstadter83dilemmas}:

\begin{example}\label{platonia}
	
	Consider a situation in which $n$ individuals are asked either to send or not a letter to an umpire. Only if a single letter is
	sent, a prize of $\$1,\!000,\!000$ is awarded to the sender and $0$ to the other participants.
	If either no letter has been sent or more than one
	has been received by the umpire, each player gets nothing.
	
	This situation can be analyzed as a $n$-player game $G$ in which
	each $A_i = \{S, D\}$ where these actions are {\em send} ($S$) or {\em do not
		send} ($D$) a letter. The payoff is $1,\!000,\!000$ for the sender of the letter if a single player
	chooses $S$, and $0$ otherwise.
	
	The Nash equilibria in the game are the $\ba \in
	\prod_{i=1}^{n} A_i$ such that $ |\{i:a_i = S\}| \ge 1$. That is, each
	equilibrium obtains if and only if one or more  players submit a letter.
	The mixed strategies Nash equilibria are all those in $\Delta(\{S,D\})^n$ such that at least for one $i$, $\sigma_i(S)=1$, that is, at least one of the players chooses $S$ with certainty.

	Superrationality in pure strategies obtains in any profile in the
	diagonal, in particular when each agent chooses not to send a letter
	and obtains a $0$ payoff, or all of them send the letter, i.e. $\mathcal{SR}_G = \{(S, \ldots,S),(D, \ldots,D)\}$. A superrational agent who ponders mixed
	strategies instead, looks for all the profiles $(p_{1}^{*}, \ldots,
	p_{n}^{*})$ such that for each $i$, $p_{i}^{*} = \delta^{*}$ where
	$\delta^{*} \in [0,1]$ maximizes

	\[E\pi_i(\delta,\ldots,\delta) \ = \ 1,\!000,\!000\; \delta (1 - \delta)^{n-1}\]
	
	The first order condition yields
	\[(1-\delta)^{n-1} - (n-1)\delta (1- \delta)^{n-2} \ = \ 0\]
	
	\noindent assuming that $(1- \delta)^{n-2} \neq 0$ (i.e. $\delta
	\neq 1$) we get:
	
	\[ 1 - \delta - (n-1) \delta \ = \ 0\]
	
	That is, $\delta^{*} = \frac{1}{n}$. The expected payoff is then strictly positive, unlike the ones corresponding to the pure strategy superrational profiles. Thus, the latter cannot be seen as superrational mixed strategy solutions.
\end{example}

Since pure superrational profiles may not correspond to mixed superrational ones, we have to establish the existence of the latter ones independently:

\begin{proposition}\label{existence_mixed}
	In any symmetric game, there exist at least one superrational profile of mixed strategies.
\end{proposition}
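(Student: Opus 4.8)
The plan is to reduce the statement to the classical fact that a continuous real-valued function on a compact set attains its maximum. Since $G$ is symmetric we have $A_1 = \cdots = A_n$, so the diagonal set $\Gamma^{\Delta}_{G}$ is in bijection with the single simplex $\Delta A_1$ through the map $\sigma \mapsto (\sigma, \ldots, \sigma)$. Because $A_1$ is finite, $\Delta A_1$ is a closed and bounded subset of $\mathds{R}^{|A_1|}$, hence compact. This identifies the search for a superrational mixed profile with the maximization of a single function over $\Delta A_1$.

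The one point that genuinely uses symmetry is the mixed-strategy analogue of Lemma \ref{same_payoff}: if all players use the same mixed strategy $\sigma$, they all obtain the same expected payoff. To see this, I would fix players $i, j$ and a permutation $\tau$ with $\tau^{-1}(i) = j$, and start from $E\pi_i(\sigma, \ldots, \sigma) = \sum_{\ba \in A} \pi_i(\ba) \prod_{k=1}^n \sigma(a_k)$. Applying the invariance \eqref{symmetric} turns $\pi_i(a_1, \ldots, a_n)$ into $\pi_j(a_{\tau(1)}, \ldots, a_{\tau(n)})$, and then the change of summation variable $\bb = (a_{\tau(1)}, \ldots, a_{\tau(n)})$ --- a bijection of $A$ onto itself that merely reorders the factors of the product $\prod_{k=1}^n \sigma(a_k)$ --- yields $E\pi_i(\sigma, \ldots, \sigma) = E\pi_j(\sigma, \ldots, \sigma)$. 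Thus on the diagonal a single function $f(\sigma) := E\pi_1(\sigma, \ldots, \sigma)$ records the common expected payoff.

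Finally I would note that $f$ is continuous on $\Delta A_1$: by the expected-payoff formula it is a polynomial in the finitely many coordinates $\sigma(a)$, $a \in A_1$. By the Weierstrass extreme value theorem $f$ attains a maximum at some $\sigma^{*} \in \Delta A_1$. Setting $\bsigma^{*} = (\sigma^{*}, \ldots, \sigma^{*})$, for every $\bsigma = (\sigma, \ldots, \sigma) \in \Gamma^{\Delta}_{G}$ and every $i \in I$ we have $E\pi_i(\bsigma^{*}) = f(\sigma^{*}) \geq f(\sigma) = E\pi_i(\bsigma)$, so $\bsigma^{*} \in \mathcal{SR}^{\Delta}_{G}$.

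As for the main obstacle: there is no deep difficulty here, and the work is essentially bookkeeping. The only step requiring care is the change-of-variables verification that the diagonal payoff is player-independent, since without it a maximizer of player $1$'s payoff need not maximize the others'. Compactness of $\Delta A_1$ and continuity (indeed smoothness) of $f$ are immediate from finiteness of the action set.
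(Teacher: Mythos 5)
Your proof is correct and follows essentially the same route as the paper's: establish that on the diagonal all players receive the same expected payoff via permutation invariance and a reindexing of the sum, then apply compactness of the (finite-dimensional) simplex and continuity of the expected payoff to extract a maximizer. The only cosmetic difference is that you identify $\Gamma^{\Delta}_{G}$ with $\Delta A_1$ directly, whereas the paper treats it as a closed subset of the compact product $\Delta A_1 \times \ldots \times \Delta A_n$.
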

\begin{proof}
	First of all, for every pair of players $i, j\in I$, and every mixed strategies profile $\bsigma \in \Gamma^{\Delta}_{G}$, say $\bsigma = (\sigma, \ldots, \sigma)$, $E\pi_i(\bsigma) = E\pi_j(\bsigma)$. To see this, take any permutation $\tau$  such that $\tau(i)=j$. Then the products $\prod_{i=1}^n \sigma(a_i)$ and $\prod_{i=1}^n \sigma(a_{\tau(i)})$ are equal, being just rearrangements one of the other. Then, by the symmetry of the game we have:  
	\[E\pi_i(\bsigma) = \sum_{(a_1, \ldots, a_n) \in A} \left(\pi_i(a_1, \ldots, a_n) \prod_{k=1}^n \sigma(a_k) \right)=\]
	\[=\sum_{(a_{1}, \ldots, a_{n}) \in A}\left(\pi_j(a_{\tau(1)}, \ldots, a_{\tau(n)})\prod_{k=1}^n \sigma(a_{\tau(k)})\right) = E\pi_j(\bsigma).\]
	
To show the existence of a mixed strategies superrational profile, it will suffice to prove the existence of a maximum for one of the players. 
	
	Since each $A_i$ is a finite set, $\Delta A_i$ is a compact subset of $\RR^k$, where $k$ is the number of elements in $A_i$. The set $\Gamma^{\Delta}_{G}$ is a closed subset of the compact $\Delta A_1 \times \ldots \times \Delta A_n$, and therefore compact. 	Since $E\pi_i$ restricted to $\Gamma^{\Delta}_{G}$ is a continuous function from a compact set to $\RR$, it reaches its maximum value, so any of the points where it does is a superrational profile of mixed strategies.
\end{proof}

\begin{example}
	In Example \ref{Bad} the mixed superrational profile is $(\sigma^*,\sigma^*)$, where $\sigma^*$ is the probability distribution that assigns probability $\frac{1}{2}$ to playing each of the actions. The pure strategy superrational profiles are not mixed strategy superrational profiles. Notice that  $(\sigma^*,\sigma^*)$ is also the only non degenerate mixed Nash equilibrium of the game.
\end{example}

It could be speculated that resorting to superrationality in mixed strategies may be useful for analyzing non-symmetric games. Of course this is not possible if $A_i \neq A_j$ for any pair $i,j \in I$, since then a distribution $\sigma_i \in \Delta A_i$ cannot be compared with a $\sigma_j \in \Delta A_j$ because their domains are different.

The following example shows that even if the strategy sets of the players are all the same, superrationality cannot be applied if the underlying game $G$ is not symmetric:

\begin{example} The Battle of the Sexes of Example \ref{BoS} is a non-symmetric game with two pure Nash equilibria (\textit{Box,Box}) and (\textit{Ballet, Ballet}) as well as a mixed strategy Nash equilibrium $(\sigma^{*}_1,\sigma^{*}_2)$ where   $\sigma^{*}_{1}(\mbox{\textit{Box}}) = \frac{2}{3}$, and  $\sigma^{*}_{2}(\mbox{\textit{Box}})= \frac{1}{3}$.

There is no superrational solution in pure or mixed strategies since  both  $\mathcal{SR}_{G}$ and $\mathcal{SR}^\Delta_{G}$ are empty. To see the latter case, consider profiles $(p, p) \in \Gamma^{\Delta}_{G}$, in which $p$ is the probability of \textit{Box}. A superrational solution involves maximizing both	 $E\pi_1(p,p)$ and $E\pi_2(p,p)$. We have that $E\pi_1(p,p)= 2 p^2 + 1 (1 -2p +p^2)$$= 1 - 2p + 3 p^2$, which yields a maximum $p_1^{*} = \frac{1}{3}$, while $E\pi_2(p,p)= 1 p^2 + 2 (1 -2p +p^2)$$= 2 - 4p + 3 p^2$
which yields $p_2^{*} = \frac{2}{3}$. Since $p_1^{*} \neq p_2^{*}$, no superrationality obtains.
\end{example}

\section{Superrational types}\label{harsanyi}

The definition of superrationality is not explicit about one of its fundamental components, the willingness of players to choose a superrational profile of actions. That is, superrationality assumes that the players ``have reasons'' for choosing a superrational profile. One of those reasons involves the conception the players have of the decision problem they face. Since only a limited view of the problem can justify in general choosing superrationally justifiable actions, we want to include in our models the beliefs that can lead a player to have such an viewpoint. Thus, the analysis falls straight in the realm of {\em epistemic game theory} \cite{brandenburgerbook}. One of the main approaches in this area was introduced by John C. Harsany, who presented the notion of {\em type} to derive a game with complete information from one with incomplete information \cite{harsanyi67games1}. In informal terms this concept amounts to summarize all the information and beliefs possessed by a player. Once the types of the players are determined, the game becomes one of complete information in the sense that there is no external information relevant to the decisions they make. Since beliefs are customarily represented by probability distributions and conditionals on them, Harsanyi named the players of such a game {\em `Bayesian'}. Each player's beliefs have to take into account what the other players' beliefs may be, so one gets an infinite regression of beliefs about beliefs about beliefs... As a way out of this hurdle, Harsanyi proposed that ``certain attributes'' of the players, or the players themselves are ``drawn at random from certain hypothetical populations containing a mixture of individuals of different ``types'', characterized by different attribute vectors''\cite{harsanyi67games1}.

While in Harsanyi's formulation the focus is on a few quantifiable attributes, later research, starting with \cite{boge79solutions} focused on building a space of types in which all possible beliefs are accounted for.  A quite general frame for such constructions is the theory of coalgebras\footnote{The theory of coalgebras uses the language of category theory.  A {\em coalgebra} for a functor $F$ in a category $\C$ is an object $X$ of the category together with a morphism from $X$ to $F(X)$. This quite simple definition covers a wide number of systems in which an observable behavior is a key feature. For a general introduction to this aspect of the theory of coalgebras, see \cite{rutten00universal}.} on measurable spaces \cite{moss04harsanyi}.   For a game with $n$ players we consider coalgebras in the category $\Meas^n$  where $\Meas$ is the category of measurable spaces and measurable functions between them. Let $\Delta$ be now the functor that sends a measurable space $X$ to the space of all probability distributions definable on $X$. To make $\Delta X$ into a measurable space, we endow it with the $\sigma$-algebra of subsets generated by the family of sets $\set{\mu\in\Delta X:\mu(E)\ge p}$ for all $p\in[0,1]$ and $E$ measurable in $X$. For a measurable function $f:X\to Y$,  $\mu\in\Delta X$ and $E$ a measurable subset of $Y$,  $(\Delta f)(\mu)(E)=\mu(f^{-1}(E))$. Using this notation, if we denote with $\rho_i$  the projection of a product measurable space $X_1\times\ldots\times X_n$ into $X_i$, then the marginal of a measure $\mu\in \Delta(X_1\times\ldots\times X_n)$ over $X_i$ is $Marg_{X_i}\mu=(\Delta\rho_i)\mu=\mu\circ\rho_i^{-1}$. Given a point $x$ in a measurable space $X$, the {\em Dirac measure} $\delta_x$ is the measure that gives $1$ when applied to any measurable set containing $x$ and $0$ to the rest.

Let $T_i$ be a measurable space for each $i\in I$ and let  $T_{-i}$ be the product $\prod_{j\neq i}T_j$ for $j, i\in\set{1,2,\ldots n}$. If  $K_i$ is a set of things about which player $i$ is uncertain, then we can use the functor $F:\Meas^n\to\Meas^n$ defined by
\[F(T_1, \ldots, T_n)=(\Delta(K_1\times T_{-1}),\ldots, \Delta(K_n\times T_{-n}))\]
so that a coalgebra for $F$ is an $n$-tuple of measurable spaces $(T_1,\ldots,T_n)$ and an $n$-tuple of  functions $(f_1,\ldots,f_n)$ such that $f_i: T_i\to\Delta(K_i\times T_{-i})$. The idea behind this setting is each space $T_i$ is the space of {\em types} for player $i$. For each type $t$ in $T_i$, $f_i(t)$ is a probability distribution describing the beliefs held by player $i$ if she is of type $t$, about their uncertainties in $K_i$ and the types of the other players, represented by $T_{-i}$.

\[\xymatrix{
	(T_1\ar[d]_{f_1}\mbox{,}& \ldots&,T_n\ar[d]_{f_n})\\
	( \Delta(K_1\times T_{-1}) ,&\ldots&,\Delta(K_n\times T_{-n})) }
\]

We consider a symmetric game in which the uncertainties of the players are the actions of the other players, thus we have $K_i=A_{-i}$. By Proposition \ref{existSRprofile}, there exists at least one superrational profile and therefore, at least one superrationally justifiable action. Notice that now all the uncertainty sets are isomorphic. In order to have the condition of believing the other players are of the same type we need the pool of available types to be the same for all players. We  appeal to the existence of a \emph{universal type space} $T$, that is, a type space in which all possible types can be found \cite{mertens84formulation}, \cite{moss04harsanyi}, and assume that $T_i=T$ for all $i\in I$. Universal type spaces have the property that they are isomorphic to the space into which they map, so in this case, we have that $T_i$ is isomorphic to $\Delta(A_{-i}\times T_{-i})$. Let $\rho_j$ and $\rho'_j$ be the projections from $A_{-i}\times T_{-i}$ to $A_j$ and $T_j$, respectively, so $\Delta\rho_j(f_i(t))$ represents the beliefs of player $i$ about the possible actions of player $j$, while $\Delta\rho'_j(f_i(t))$ represents the beliefs about player $j$'s type.

\[\xymatrix{
	&T_i	\ar[d]_{f_i}&\\
	\Delta A_j&  \Delta(A_{-i}\times T_{-i}) \ar[l]^{\Delta\rho_j} \ar[rr]_{\Delta\rho'_j}&&\Delta T_j
}
\]

\begin{definition}
 A \emph{superrational type} is an element $t\in T_i$ such that:
 \begin{itemize}
 	\item    for all $j\neq i$, $\Delta\rho'_j(f_i(t))=\delta_t$ 
 	\item there exists a superrationally justifiable action $a\in A_i$ such that for all $j\in I\setminus\set{i}$, $\Delta\rho_j(f_i(t))=\delta_a$.
\end{itemize}
\end{definition}
The first condition expresses, through the use of the Dirac measure,  that if player $i$ is of type $t$, then she is certain  that all the other  players are also of type $t$.  Similarly, the second condition expresses the certainty that the other players will all play the same superrationally justifiable action $a$.

The superrationally justifiable action $a\in A_i$ exists by Proposition \ref{existSRprofile}, so  a type of this kind is possible, and therefore it exists in the corresponding universal type space.

Even if a player's type is superrational, the player may choose a different action than the one associated to her type. That means that a player with a superrational type may play a non-superrationally justifiable action, as highlighted in the Prisoner's Dilemma of Example \ref{prisoner}.  To find conditions on the players that will lead to superrational profiles, we look at the strategies available to the players. In the bayesian setting the following concept  associates an action to each possible type of a player:

\begin{definition}
\cite{heifetz08epistemic}	A \emph{bayesian strategy} for player $i$ is a function $\beta_i$ from $T_i$ to $A_i$. 
\end{definition}

The conception the player has of the decision problem she faces determines the choice of the bayesian strategy. When the determination of the bayesian strategy is guided just by the goal of maximizing the payoff functions, we say the players are \textit{rational}. Yet, if we want to model superrationality, we must fashion the way the player conceives of the problem, that is, that she restricts her available options according to her beliefs.

\begin{definition}
	A \emph {superrational  bayesian strategy} for player $i$ is a function $\beta_i:T_i\to A_i$, such that for each superrational type $t$ in $T_i$, if  $\Delta\rho_j(f_i(t))=\delta_a$, then $\beta_i(t)=a$. 
\end{definition}

 \begin{theorem} \label{supertype}
 	In a  game in which  a single  superrationally justifiable action $a$ exists,  if each player has superrational type and uses a superrational bayesian strategy,  then the superrational profile $\ba=(a,\ldots,a)$ obtains. 
 \end{theorem}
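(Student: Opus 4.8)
The plan is to unwind the three relevant definitions in a single chain of implications, the only substantive input being the uniqueness built into the hypothesis. First I would fix the superrationally justifiable action $a$ whose existence and uniqueness are granted by the statement, recalling from the remark following the definition of superrationally justifiable action that this property is player-independent, so that $a$ is the unique such action in each $A_i$. (If one wishes to situate this in the symmetric setting where the type spaces were built, existence of at least one such action is anyway guaranteed by Proposition \ref{existSRprofile}; the extra content of the hypothesis is uniqueness.)

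Next, for each player $i$ let $t_i\in T_i$ denote her superrational type and $\beta_i:T_i\to A_i$ her superrational bayesian strategy. The second clause in the definition of a superrational type supplies, for player $i$, some superrationally justifiable action $a'$ witnessing $\Delta\rho_j(f_i(t_i))=\delta_{a'}$ for every $j\neq i$. The crucial step is that, because $a$ is the only superrationally justifiable action available, this witness $a'$ must coincide with $a$. This is exactly where the hypothesis is used: it collapses the existential quantifier in the type definition to the single common target $a$, and so prevents the players from each aiming at a distinct justifiable action, which is precisely the kind of miscoordination possible when several such actions exist (as in Example \ref{Bad}).

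Having established $\Delta\rho_j(f_i(t_i))=\delta_a$ for every $i$ and every $j\neq i$, I would then invoke the defining property of a superrational bayesian strategy to conclude $\beta_i(t_i)=a$ for each $i$. Hence every player plays $a$ and the realized profile is $\ba=(a,\ldots,a)$. To confirm that this is genuinely a superrational outcome, I would finally note that since $a$ is superrationally justifiable we have $\pi_i(a,\ldots,a)\geq\pi_i(a',\ldots,a')$ for every $i$ and every diagonal profile $(a',\ldots,a')\in\Gamma_G$, which is exactly the condition $\ba\in\mathcal{SR}_G$.

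I expect no real obstacle: the argument is a direct application of the definitions of superrational type, superrational bayesian strategy, and superrational profile, with the single nontrivial hinge being the uniqueness assumption that turns the per-player existential witness into one definite action shared by all. The only point demanding care is bookkeeping of the quantifiers, namely verifying that the witnessing action inside each player's type is independent of the index $j\neq i$ and then matching it against the unique global action $a$ before applying the strategy condition.
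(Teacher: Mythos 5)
Your proposal is correct and follows essentially the same route as the paper's (much terser) proof: unwind the definitions of superrational type and superrational bayesian strategy, use the uniqueness of the superrationally justifiable action to force every player's Dirac belief and hence every $\beta_i(t_i)$ to land on the same $a$. Your added final check that $(a,\ldots,a)\in\mathcal{SR}_G$ is a harmless elaboration the paper leaves implicit.
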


\begin{proof}
		If each player has a superrational type, and is using a superrational bayesian strategy, then they all must be playing the same action $a$ which is the only superrationally justifiable one.
\end{proof}

\begin{example}
	The uniqueness condition is necessary in the previous theorem. If more than one action is superrationally justifiable,  the coordination problem of choosing which one to play arises. In the following symmetric version of the game of Example \ref{justin}, both players could have superrational types and superrational bayesian strategies and still play different actions, and therefore obtaining a suboptimal payoff.
	\begin{center}
		\begin{tabular}{c|c|c|c}
			&a&b&c\\\hline
			a& $(3,3)$&$(0,0)$&$(0,0)$\\ \hline
			b& $(0,0)$&$(3,3)$&$(0,0)$\\ \hline
			c& $(0,0)$&$(0,0)$&$(2,2)$
		\end{tabular}
	\end{center}
	
	 On the other hand, in Example \ref{prisoner}, if the second player is of superrational type, she would choose action $C$, disregarding the possibility of getting a better payoff by choosing $D$. 
\end{example}

Theorem~\ref{supertype} gives sufficient conditions for a superrational outcome in a game. While seemingly stringent, they can be compared to the epistemic conditions required to ensure a Nash equilibrium. In \cite{aumann95epistemic}, Aumann and Brandenburger give conditions, further elaborated in \cite{osborne94course}, that amount to saying that for a Nash equilibrium to be reached, all the players need to \textit{know} the actions of the others and assume that they are rational as well, meaning that their actions are the best possible responses to their beliefs. This can be expressed, in the case of pure strategy Nash equilibria, that the conditions for reaching a profile $\ba^{*}= (a^{*}_1, \ldots, a^{*}_n) \in{\mathcal{NE}_G}$ are that there exist a profile of types $(t_1, \ldots, t_n)$ and a profile of bayesian strategies $(\beta_1, \ldots, \beta_n)$ such that for every $i$, and for all $j\neq i$, $\Delta\rho_j(f_i(t_i))=\delta_{a^*_j}$ and $\beta_i(t_i)=a^{*}_i$, where $a^*_i$ is the best response to $\ba^{*}_{-i}$.    

The situation is pretty much the same if we consider mixed strategies in symmetric games. Now we have to replace the sets of pure strategies $A_i$ by their corresponding sets of mixed strategies $\Delta A_i$. Let $\gamma_j$ be the projection from $(\prod_{k\neq i}\Delta A_k)\times T_{-i}$ to $\Delta A_j$.  In this setting, by Proposition \ref{existence_mixed}, there exist at least one superrationally justifiable mixed strategy $\sigma^*$.  The existence of the universal type space establishes that there is a type $t$ such that $\Delta\gamma_j(f_i(t))=\delta_{\sigma^*}$.

\begin{definition}
	A \emph {superrational  bayesian mixed strategy} for player $i$ is a function $\alpha_i:T_i\to \Delta A_i$, such that for each superrational type $t$ in $T_i$, if  $\Delta\gamma_j(f_i(t))=\delta_\sigma$, then $\alpha_i(t)=\sigma$. 
\end{definition}
Having defined superrational bayesian mixed strategies, we can now  claim a result like Theorem \ref{supertype} for mixed strategies:

 \begin{theorem} \label{supertype}
 	In a  game in which  a single  superrationally justifiable mixed strategy $\sigma$ exists,  if each player $i$ has superrational type $t_i$ and uses a superrational bayesian mixed strategy $\alpha_i$,  then the superrational profile $\bsigma=(\sigma,\ldots,\sigma)$ obtains. 
 \end{theorem}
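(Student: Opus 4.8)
The plan is to mirror the argument used for the pure-strategy version of the theorem, substituting mixed strategies for pure actions throughout and invoking the uniqueness hypothesis in exactly the place where the pure proof used uniqueness of the justifiable action. First I would unwind the mixed analog of the superrational type condition: since each player $i$ has a superrational type $t_i$, player $i$ is certain that every other player plays one and the same superrationally justifiable mixed strategy, i.e. there is a superrationally justifiable $\sigma'$ with $\Delta\gamma_j(f_i(t_i))=\delta_{\sigma'}$ for all $j\neq i$. Because by assumption the game admits a single superrationally justifiable mixed strategy, namely $\sigma$, this forces $\sigma'=\sigma$, so that $\Delta\gamma_j(f_i(t_i))=\delta_{\sigma}$.

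Next I would apply the defining property of a superrational bayesian mixed strategy: since $\alpha_i$ is such a strategy and $\Delta\gamma_j(f_i(t_i))=\delta_{\sigma}$, we obtain $\alpha_i(t_i)=\sigma$. As this holds for every $i\in I$, all players play $\sigma$, so the realized profile is $(\sigma,\ldots,\sigma)=\bsigma$, which lies in $\Gamma^\Delta_G$. Finally, invoking uniqueness once more together with Definition~\ref{mixedNash}, the unique superrationally justifiable mixed strategy produces precisely the profile that maximizes $E\pi_i$ over $\Gamma^\Delta_G$ (such a maximizer exists by Proposition~\ref{existence_mixed}), and hence $\bsigma\in\mathcal{SR}^\Delta_G$ is the superrational profile claimed.

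I do not expect a genuine obstacle: the content is a bookkeeping translation of the pure-strategy proof, and the single point requiring care is making explicit the mixed-strategy analog of the superrational type condition — that the player's certainty about her opponents' play is concentrated, via a Dirac measure $\delta_{\sigma}$, on a superrationally justifiable mixed strategy — since this condition is stated only informally for the mixed case in the preceding discussion. Once that definition is pinned down, the uniqueness hypothesis does all the remaining work, exactly as in the pure-strategy setting.
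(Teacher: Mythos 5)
Your argument is correct and is exactly the translation of the pure-strategy proof that the paper intends (the paper states this mixed version without an explicit proof, deferring to the analogy with the pure case). You are also right that the one point needing care is that the superrational-type condition for mixed strategies is only given informally in the text via $\Delta\gamma_j(f_i(t))=\delta_{\sigma^*}$, and your way of pinning it down matches the authors' setup.
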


\section{Superrationality in strategic belief models}

The previous analysis adapted Harsanyi's framework, in which players are uncertain about the final payoffs of their actions to the case in which the uncertainty is on the actions to be carried out by the other players. In \cite{brandenburger03existence}, Brandenburger addresses this \emph {strategic uncertainty} defining the  notion of an $S$-based (interactive) possibility structure, extended jointly with Keisler  in \cite{brandenburger06impossibility} to define  {\em strategic belief models}. We adapt that definition here for the case of $n$ players and in the general framework of coalgebras.

 Let $T_1, \ldots , T_n$ be the sets of types from which each of the players are chosen,  and consider for each $i\in I$ the set of all nonempty subsets of $U_{-i}=\prod_{j\neq i} (A_j\times T_j)$. We call this set $\N(U_{-i})$.

  We consider the category $\Set^n$ which has n-tuples of sets as objects and n-tuples of functions in $\Set$, acting componentwise, as morphisms.    In this category, we use the functor that  sends an n-tuple of sets $(T_1,\ldots ,T_n)$ to the n-tuple $(\N(U_{-1}),\ldots , \N(U_{-n}))$.   A coalgebra for this functor is an n-tuple of sets  $(T_1,\ldots,T_n)$ together with an $n$-tuple of functions $(f_1,\ldots, f_n)$  with  $f_i:T_i\to  \N(U_{-i})$ for all $i=1,\ldots, n$.

A final coalgebra for this functor does not exist (unless each $A_i$ is a singleton or empty, in which case it is trivial), and this is the main result in \cite{brandenburger03existence}. We can still investigate how to characterize superrational types in this context.  The situation changes if one replaces the functor $\N$ by $\N_\omega$ that assigns to each set the set of all its finite nonempty subsets \cite{worrell99terminal}. This would amount to the  reasonable assumption that the agents can only entertain  finite sets of beliefs.

Thus we are considering coalgebras for the functor:
\[F'(T_1, \ldots, T_n)=(\N_\omega(U_{-1}),\ldots , \N_\omega(U_{-n})).\]

We call a pair $(a_i,t_i)\in  A_i\times T_i$ the {\em state of the player} $i$ and an $n$-tuple $((a_1,t_1),\ldots,(a_n,t_n))$ a {\em state of the world}. A state of the world can be regarded as a state of player $i$ together with an element of $U_{-i}$.

Following the lines of the analysis of superrationality from the previous section, we may consider symmetric games in which all the players draw their type from the same set $T=T_i$. Then for each $t\in T$, $f_i(t)$ is a nonempty finite subset of $U_{-i}$. We call these \emph{BK types}  (Brandenburger-Keisler types) to distinguish them from the models of types considered before.

\begin{definition}
	A \emph{BK superrational type} is an element $t\in T$ such that $f_i(t)$ is a singleton set $\set{((a,t),\ldots,(a,t))}$ where $a$  is superrationally justifiable. Furthermore, player $i$ is in a \textit{superrational state} if her state is the pair $(a,t)$.
\end{definition}

Thus a player having a BK superrational type regards as the only possible state for the other players one in which they all have the same type as her, while being in a superrational state reflects her conception of the options available in the decision problem she faces. It follows immediately from the definitions that:

\begin{theorem}\label{sRBK}
	In a symmetric game if each player is in a superrational state and there is a single superrationally justifiable action $a$, then a superrational profile obtains. 
\end{theorem}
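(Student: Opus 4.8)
The plan is to unwind the definitions and observe that the uniqueness hypothesis is exactly what forces all players onto a common diagonal action. First I would record what "superrational state" supplies for each player. By definition, player $i$ being in a superrational state means that $i$ has a BK superrational type $t_i$, so that $f_i(t_i)$ is a singleton $\set{((a_i,t_i),\ldots,(a_i,t_i))}$ for some superrationally justifiable action $a_i$, and moreover that $i$'s actual state is the pair $(a_i,t_i)$. In particular the action component of player $i$'s state is $a_i$, i.e.\ player $i$ plays the superrationally justifiable action attached to her type.

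The crucial step is to invoke the single-action hypothesis. Since $a$ is the \emph{only} superrationally justifiable action in the game, each of the actions $a_i$ furnished above must coincide with $a$. Consequently every player plays the same action $a$, and the realized action profile is $\ba=(a,\ldots,a)$, which lies in the diagonal $\Gamma_G$. I would then check that this diagonal profile is genuinely superrational, which is immediate from the definitions: because $a$ is superrationally justifiable, for every $i\in I$ and every $a'\in A_i$ we have $\pi_i(a,\ldots,a)\ge\pi_i(a',\ldots,a')$; as the elements of $\Gamma_G$ are precisely the profiles $(a',\ldots,a')$, this is exactly the defining condition for $\ba\in\mathcal{SR}_G$.

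There is essentially no hard obstacle here, and indeed the paper flags the result as following immediately from the definitions; the entire content lies in correctly reading off what "superrational state" and the uniqueness assumption provide. The one point to be careful about is the role of uniqueness: without it, the superrationally justifiable actions $a_i$ chosen by the different players need not coincide, so the realized profile $(a_1,\ldots,a_n)$ could fall off the diagonal $\Gamma_G$ entirely and thus fail to be superrational. This is precisely why the single-action hypothesis cannot be dropped, mirroring the coordination phenomenon already seen in the pure-strategy setting of Theorem~\ref{supertype}.
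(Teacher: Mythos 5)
Your proof is correct and matches the paper's treatment: the paper gives no explicit proof, stating only that the result follows immediately from the definitions, and your argument is exactly the intended unwinding (each player's state forces her to play her type's superrationally justifiable action, uniqueness collapses these to the single action $a$, and the diagonal profile $(a,\ldots,a)$ is superrational because $a$ is superrationally justifiable). Your closing remark on why uniqueness is indispensable also mirrors the coordination caveat the paper makes for Theorem~\ref{supertype}.
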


We can extend this result to mixed strategies by considering the functor
\[F''(T_1, \ldots, T_n)=(\N_\omega(\prod_{j\neq 1}(\Delta A_j\times T_j)),\ldots , \N_\omega(\prod_{j\neq n}(\Delta A_j\times T_j))).\] 
Here the beliefs of each type $t_i\in T_i$ are represented by a finite set of partial states of the world in which the strategies contemplated are, instead of actions as in $F'$,  mixed strategies.

Now to each BK superrational type $t$ corresponds a singleton  $\set{((\sigma,t),\ldots,(\sigma,t))}$, where $\sigma$ is a superrationally justifiable mixed strategy. A result similar to Theorem \ref{sRBK} can also be proved. Notice that this result can be applied to justify the mixed strategy superrational outcome in Example \ref{platonia}.

\section{Players with dissimilar type spaces}

An interesting problem we have only skirted so far is that to define Hofstadter's {\em superrationality} we need means to establish if the type of two players is in some way the ``same''. In a certain sense we abused of the symmetry of the game by imposing it on the type spaces of the players. We want now to be able to talk about superrationality even if the type spaces of the players are different. One way of doing this is to use a binary relation $R$ which should hold when two types in different type spaces have the same beliefs and they choose the same action. We work now with BK type spaces where the idea may be more clearly seen, so we have functions $f_i:T_i\to \N_\omega(U_{-i})$.

\begin{definition} An {\em identification relation} between type spaces $T_1,\ldots, T_n$ is  an equivalence relation $R$ on the set $\bigcup_{i\in I}T_{i}$  such that 
for all $i, j\in I$ if $t_iRt_j$, then there exists a  permutation $\tau: I \rightarrow I$ such that $\tau(i)=j$ and for all $\bar{u} =((a_1,u_1), \ldots, (a_{i-1},u_{i-1}),(a_{i+1},u_{i+1}), \ldots,(a_{n},u_{n})) \in f_i(t_i)$, there exists $\bar{v} =((b_1,v_1), \ldots, (b_{j-1},v_{j-1}),(b_{j+1},v_{j+1}), \ldots,(b_{n},v_{n}))\in f_j(t_j)$ such that for all $k\in I\setminus \set{i}$, $a_k=b_{\tau(k)}$ and $u_kRv_{\tau(k)}$. 
\end{definition}

This is to say that two types $t_i$ and $t_j$ are related by $R$ when each of the elements in the set $f_i(t_i)\subseteq U_{-i}$ can be matched through a permutation with one element of $f_j(t_j)\subseteq U_{-j}$. This matching between elements in $U_{-i}$ and $U_{-j}$ is such that for each $k\neq i$, the action that a player of type  $t_i$ believes that player  $k$ will take is the same as the one that a player of type $t_j$ thinks player $\tau(k)$ will choose, while the corresponding types are themselves related by $R$.

In this new context we need to revise the definition of {\em superrational state} of an agent $i$:

\begin{definition}
The state of an agent $i$ is a {\em superrational state}, denoted  $(a, t_i) \in \mathcal{SR}^i$ iff
 $t_i$ is a BK superrational type such that $f_i(t_i)$ is a singleton $\set{((a,t_1), \ldots, (a,t_{i-1}),(a,t_{i+1}), \ldots,(a,t_{n}))}$, where $a$ is superrationally justifiable and there exists an identification relation $R$ such that $t_jRt_i$ for all $j\neq i$.
\end{definition}
This means that $i$ believes that all other players will choose the same superrationally justifiable action $a$ and that their types are related to her own, while she herself will play action $a$. Then even with different type spaces for different players, we have a result like the one in Theorem~\ref{sRBK}:

\begin{theorem}\label{sRdifftypes}
In a symmetric game if each player $i$'s state is in $\mathcal{SR}^i$, and there is a single superrationaly justifiable action, a superrational profile obtains. 
\end{theorem}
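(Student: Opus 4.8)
The plan is to reduce the statement to the single observation that, under the hypotheses, every player plays the same action $a^*$; since the conclusion concerns only the tuple of actions actually chosen, this suffices. I would start by fixing $a^*$ as the unique superrationally justifiable action, whose existence is guaranteed by Proposition \ref{existSRprofile} and whose uniqueness is assumed.

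I would then unwind the hypothesis that each player $i$'s state lies in $\mathcal{SR}^i$. By definition this state is a pair $(a^{(i)}, t_i)$ in which $t_i$ is a BK superrational type, $f_i(t_i)$ is the singleton $\set{((a^{(i)}, t_1), \ldots, (a^{(i)}, t_{i-1}), (a^{(i)}, t_{i+1}), \ldots, (a^{(i)}, t_n))}$, and the action $a^{(i)}$ is superrationally justifiable, with each $t_j$ ($j\neq i$) related to $t_i$ through an identification relation $R$. Uniqueness of the superrationally justifiable action forces $a^{(i)} = a^*$ for every $i\in I$. Since the action component of a state is precisely what the player plays, the realized profile is $(a^*, \ldots, a^*)$, and comparing the definitions of superrationally justifiable action and of superrational profile (both quantify $\pi_i(a,\ldots,a)$ over the diagonal $\Gamma_G$) shows at once that $(a^*, \ldots, a^*) \in \mathcal{SR}_G$, so a superrational profile obtains.

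I do not anticipate a genuine obstacle; as with Theorem \ref{sRBK}, the result follows directly once the definitions are expanded. The one point worth flagging is that neither the identification relation $R$ nor the symmetry of the game enters the determination of the realized profile, which is pinned down solely by the uniqueness of $a^*$. Their role is instead to render the hypothesis meaningful across distinct type spaces: the clause $t_j R t_i$ for all $j \neq i$ is what lets player $i$, whose type is drawn from a possibly different space $T_i$, coherently regard the other players as being of ``the same'' type as herself, thereby recovering in this heterogeneous setting the work done by a shared universal type space in the preceding sections. The substantive content therefore resides in the definitions, and the proof reduces to the verification just sketched.
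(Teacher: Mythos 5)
Your proof is correct and matches the paper's treatment, which presents this result (like Theorem \ref{sRBK}) as following immediately from the definitions: uniqueness of the superrationally justifiable action forces every player's realized action to be $a^*$, and the diagonal profile $(a^*,\ldots,a^*)$ is superrational by definition. Your closing remark that the identification relation and symmetry serve only to make the hypotheses meaningful across distinct type spaces, rather than entering the deduction itself, is an accurate reading of the paper's intent.
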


\begin{example}

Consider again the Prisoner's Dilemma from example \ref{prisoner}, which is symmetric. Assume that the possible types of players $1$ and $2$ are given by $T_1 = \{r, s, t \}$ and $T_2 = \{u, w\}$. On the other hand, suppose that:

\[ f_1(r) = \{(C, u)\} \ \ f_1(s) = \{(D, w)\} \ \ f_1(t) = \{(D, u)\} \]

\noindent and

\[ f_2(u) = \{(C, r)\} \ \ f_2(w) = \{(D, s)\}\]

It is easy to check that the equivalence relation   $R$ generated by $\{ (r,u), (s,w)\}$ is an identification  relation. Then $ (C,r)$ and $(D,s)$ are in $\mathcal{SR}^1$, and $ (C,u)$ and  $(D,w)$ are states  in $\mathcal{SR}^2$. But only in the states $(C,r)$ and $(C,u)$ the action taken is superrationally justifiable. Then according to Theorem \ref{sRdifftypes}, the superrational profile $(C,C)$ obtains.

\end{example}

\section{Conclusions}

In this paper we assessed the reach and the limits of Hofstadter's concept of superrationality. While the notion is quite intuitive, its formalization requires to distinguish between superrationally justifiable actions and profiles, the latter constituted by the former. But this distinction poses the questions of when do superrational profiles exist and what ensures that such a profile will consist of the \emph{same} superrationally justifiable actions chosen by all the players.

We have shown that  the existence of superrational profiles for symmetric games, both for profiles of pure or mixed strategies. This makes superrationality a notion with a limited range of applicability, unlike Nash equilibrium. On the other hand, both solution concepts suffer of undeterminacy in the case that more than one superrationally justifiable action or more than equilibrium exist, respectively. 

Another question that haunted the concept of superrationality is why players would reach it in the presence of incentives to deviate, making this solution much less relevant than Nash equilibrium. The answer we found is twofold.  First we discussed the types of the players, reflecting their beliefs about the other players, and then their conception of the decision problem. Both aspects are necessary to explain why a player may choose superrational actions. 
 Unlike rationality, the notion of superrationality not only amounts to maximize a payoff, but adds the constraint of assuming that all the players will identify with each other and therefore play the same action.

There is certain evidence that in an experimental context, players behave more superrationally than rationally \cite{rubinstein2007instinctive}. This indicates that the notion of  superrationality is worthy of consideration.

A feature of this analysis is that it works only for \emph{symmetric} games. While this requirement of symmetry can in the end be lifted for type spaces, it is still binding for action sets. Further work involves to extend the notion of superrationality to games with more general notions of symmetry as those explored in \cite{tohme17symmetry}.

\bibliographystyle{plain}


\begin{thebibliography}{10}
	
	\bibitem{aumann95epistemic}
	R.~Aumann and A.~Brandenburger.
	\newblock Epistemic conditions for nash equilibrium.
	\newblock {\em Econometrica}, 63(5):1161--1180, 1995.
	
	\bibitem{basu2007traveler}
	K.~Basu.
	\newblock The traveler's dilemma.
	\newblock {\em Scientific American}, 296(6):90--95, 2007.
	
	\bibitem{boge79solutions}
	W.~B{\"o}ge and Th. Eisele.
	\newblock On solutions of {B}ayesian games.
	\newblock {\em Internat. J. Game Theory}, 8(4):193--215, 1979.
	
	\bibitem{brandenburger03existence}
	A.~Brandenburger.
	\newblock On the existence of a 'complete' possibility structure.
	\newblock In Marcello Basili, Nicola Dimitri, and Itzhak Gilboa, editors, {\em
		Cognitive Processes and Economic Behaviour}, pages 30--34. Routledge, 2003.
	
	\bibitem{brandenburgerbook}
	A.~Brandenburger.
	\newblock {\em The Language of Game Theory: Putting the Epistemics into the
		Mathematics of Games}.
	\newblock World Scientific, 2014.
	
	\bibitem{brandenburger06impossibility}
	A.~Brandenburger and H.~J. Keisler.
	\newblock An impossibility theorem on beliefs in games.
	\newblock {\em Studia Logica}, 84(2):211--240, 2006.
	
	\bibitem{Gintis2009}
	H.~Gintis.
	\newblock {\em The Bounds of Reason: Game Theory and the Unification of the
		Behavioral Sciences}.
	\newblock Princeton University Press, 2009.
	
	\bibitem{harsanyi67games1}
	J.~C. Harsanyi.
	\newblock Games with incomplete information played by ``{B}ayesian'' players.
	{I}. {T}he basic model.
	\newblock {\em Management Sci.}, 14:159--182, 1967.
	
	\bibitem{heifetz08epistemic}
	A.~Heifetz.
	\newblock Epistemic game theory: Incomplete information.
	\newblock In Steven~N. Durlauf and Lawrence~E. Blume, editors, {\em The New
		Palgrave Dictionary of Economics}. Palgrave Macmillan, Basingstoke, 2008.
	
	\bibitem{hofstadter83dilemmas}
	D.~R. Hofstadter.
	\newblock Dilemmas for superrational thinkers, leading up to a luring lottery.
	\newblock {\em Scientific American}, 248(6):737--755, June 1983.
	\newblock Reprinted in \cite{hofstadter85metamagical}.
	
	\bibitem{hofstadter85metamagical}
	D.~R. Hofstadter.
	\newblock {\em Metamagical Themas}.
	\newblock Basic Books. Inc., 1985.
	
	\bibitem{Howard71}
	N.~Howard.
	\newblock {\em Paradoxes of Rationality: Theory of Metagames and Political
		Behavior}.
	\newblock MIT Press, 1971.
	
	\bibitem{KT}
	D.~Kahneman and A.~Tversky.
	\newblock Prospect theory: an analysis of decision under risk.
	\newblock {\em Econometrica}, 47:263--291, 1979.
	
	\bibitem{LuceRaiffa57}
	R.~Luce and H.~Raiffa.
	\newblock {\em Games and Decisions: Introduction and Critical Survey}.
	\newblock New York, Wiley, 1957.
	
	\bibitem{mertens84formulation}
	J.~F. Mertens and S.~Zamir.
	\newblock Formulation of {B}ayesian analysis for games of incomplete
	information.
	\newblock {\em International Journal of Game Theory}, 14:1--29, 1984.
	
	\bibitem{moss04harsanyi}
	L.~S. Moss and I.~D. Viglizzo.
	\newblock Harsanyi type spaces and final coalgebras constructed from satisfied
	theories.
	\newblock {\em Electronic Notes in Theoretical Computer Science}, 106:279--295,
	2004.
	
	\bibitem{nash51noncooperative}
	J.~Nash.
	\newblock Non-cooperative games.
	\newblock {\em Ann. of Math. (2)}, 54:286--295, 1951.
	
	\bibitem{osborne94course}
	M.J. Osborne and A.~Rubinstein.
	\newblock {\em A course in game theory}.
	\newblock MIT Press, Cambridge, MA, 1994.
	
	\bibitem{Rapoport67}
	A.~Rapoport.
	\newblock Escape from paradox.
	\newblock {\em Scientific American}, 217:50--56, 1967.
	
	\bibitem{RapoportChammah66}
	A.~Rapoport and A.~Chammah.
	\newblock The game of chicken.
	\newblock {\em American Behavioral Scientist}, 10(3):10--28, 1966.
	
	\bibitem{Rubinstein98}
	A.~Rubinstein.
	\newblock {\em Modeling Bounded Rationality}.
	\newblock MIT Press, 1998.
	
	\bibitem{rubinstein2007instinctive}
	A.~Rubinstein.
	\newblock Instinctive and cognitive reasoning: A study of response times.
	\newblock {\em The Economic Journal}, 117(523):1243--1259, 2007.
	
	\bibitem{rutten00universal}
	J.~J. M.~M. Rutten.
	\newblock Universal coalgebra: a theory of systems.
	\newblock {\em Theoretical Computer Science}, 249(1):3--80, 2000.
	
	\bibitem{Simon72}
	H.~Simon.
	\newblock {\em Theories of Bounded Rationality}.
	\newblock North-Holland, 1972.
	
	\bibitem{tohme17symmetry}
	F.~Tohm\'e and I.~Viglizzo.
	\newblock Symmetry and other combinatorial relations in strategic games, 2017.
	\newblock arXiv:1712.04563.
	
	\bibitem{worrell99terminal}
	J.~Worrell.
	\newblock Terminal sequences for accessible endofunctors.
	\newblock In {\em CMCS'99 Coalgebraic Methods in Computer Science (Amsterdam,
		1999)}, volume~19 of {\em Electron. Notes Theor. Comput. Sci.}, page 15 pp.
	(electronic). Elsevier, Amsterdam, 1999.
	
\end{thebibliography}

\end{document}